\documentclass[letterpaper, 10 pt, conference]{ieeeconf}  

\IEEEoverridecommandlockouts                              

\usepackage{amsfonts, dsfont, mathtools, amsmath, amssymb, bbold, mathrsfs, bm, upgreek}
\usepackage{graphicx, float, epsfig, color, psfrag, adjustbox, enumerate}
\usepackage[font=small]{caption}
\usepackage{refcount, url, cleveref}
\usepackage[noadjust]{cite}
\usepackage[usenames,dvipsnames,svgnames,table]{xcolor}

\usepackage{soul}

\title{\LARGE \bf
Kinematics of continuum planar grasping 
}

\author{Udit Halder, Nicolas Echeverria Zambrano, Xincheng Li
\thanks{All authors are with the Department of Mechanical and Aerospace Engineering, University of South Florida. 
  Corresponding e-mail:  {\tt\small udithalder@usf.edu}}%
}
\def\R{{\mathds{R}}}

\def\0{{\mathbb{0}}}
\def\1{{\mathds{1}}}

\newcommand{\norm}[1]{\left\lVert#1\right\rVert}

\definecolor{db}{RGB}{23,20,119}
\definecolor{dg}{RGB}{2,101,15}

\newtheorem{proposition}{Proposition}[section]
\newtheorem{definition}{Definition}
\newtheorem{remark}{Remark}

\usepackage{upgreek}
\newcommand{\dif}{\mathrm{d}}
\newcommand{\transpose}{\intercal}
\newcommand{\set}[1]{\left\{#1\right\}}

\newcommand{\material}[1]{
	\ifthenelse{\equal{#1}{\kappa}}{\upkappa}{
	\ifthenelse{\equal{#1}{\nu}}{\upnu}{
	\ifthenelse{\equal{#1}{\omega}}{\upomega}{
	\ifthenelse{\equal{#1}{\sigma}}{\upsigma}{
	\ifthenelse{\equal{#1}{\theta}}{\uptheta}{
	\mathsf{#1}}}}}}
}

\newcommand{\object}{\text{o}}

\newcommand{\Tangent}{\bm{\mathsf{t}}}
\newcommand{\Normal}{\bm{\mathsf{n}}}

\newcommand{\unitx}{\bm{\mathsf{e}}_1}
\newcommand{\unity}{\bm{\mathsf{e}}_2}

\graphicspath{{figures/}}

\begin{document}
\bstctlcite{BSTcontrol} 
\maketitle
\thispagestyle{empty}
\pagestyle{empty}


\begin{abstract}
This paper presents an analytical framework to study the geometry arising when a soft continuum arm grasps a planar object. 
Both the arm centerline and the object boundary are modeled as smooth curves. The grasping problem is formulated as a kinematic boundary following problem, in which the object boundary acts as the arm’s `shadow curve'. This formulation leads to a set of reduced kinematic equations expressed in terms of relative geometric shape variables, with the arm curvature serving as the control input. 
An optimal control problem is formulated to determine feasible arm shapes that achieve optimal grasping configurations, and its solution is obtained using  Pontryagin's Maximum Principle.
Based on the resulting optimal grasp kinematics, a class of continuum grasp quality metrics is proposed using the algebraic properties of the associated continuum grasp map. Feedback control aspects in the dynamic setting are also discussed.
The proposed methodology is illustrated through systematic numerical simulations.

\end{abstract}

\begin{keywords}
	Continuum grasping, grasp quality, soft robotics, optimal control, feedback control
\end{keywords}

\section{Introduction} \label{sec:intro}

Continuum manipulators, inspired by the flexible and dexterous appendages of organisms such as octopus arms, elephant trunks, or plant tendrils, offer unique advantages for manipulation in cluttered, uncertain, or dynamically changing environments~\cite{rus2015design, tekinalp2024topology}. Unlike traditional rigid-link manipulators, continuum arms feature redundant degrees of freedom and inherent compliance, enabling them to wrap around objects and establish stable grasps through distributed contact~\cite{li2013autonomous, mehrkish2021comprehensive, wang2024spirobs}. 
This is in contrast to traditional robotic grasping, which typically relies on a finite set of discrete contact points generated by rigid fingers~\cite{murray1994mathematical, bicchi2000robotic_short}.

Modeling and control of soft robotic manipulation have therefore become active areas of research in the fields of robotics and control theory. Theoretical modeling approaches of soft manipulators range from piecewise constant curvature models~\cite{webster2010design} to more sophisticated Cosserat rod formalisms~\cite{antman1995nonlinear, chang2023energy}. Several model-based control  strategies have also been explored~\cite{della2023model, wang2021optimal, wang2022sensory, chang2023energy, wang2025neuralbiocyber}. Although progress has been made in designing soft grasping manipulators~\cite{li2013autonomous, wang2024spirobs, kim2022physics}, explicit modeling and control design for grasping by a soft arm remained scarce in the literature~\cite{haibin2018modeling, xun2024cosserat,  halder2025statics}.

This paper builds upon our prior work~\cite{halder2025statics}, where the static conditions of a planar object being grasped were studied. The distributed contact forces along the boundary of the object were considered and the equilibrium of the object was expressed by using a `continuum grasp map' that transforms the local contact forces to the object frame of reference. Even though this approach gave rise to useful system-theoretic analogs, the kinematic and dynamic properties of the  soft arm wrapping around the object were ignored. 

The key focus of this work is to study the kinematics (or the relative geometry) of a soft arm grasping a planar object. Another major goal is to effectively design the shape of the arm, based on the object boundary. The specific contributions are as follows: 

\smallskip
\noindent
{\bf 1) Modeling the grasp kinematics.} 
We model the grasping problem as a boundary following problem, where we view the boundary as a `shadow curve' of the arm centerline (like a Bertrand mate~\cite{zhang2004boundary}). Then, we reduce the kinematic equations of the arm and the boundary in the space of relative geometric shape variables (e.g., distance and bearing to the boundary), where the arm curvature serves as control input. This leads to a (curvature) control problem for synthesizing desirable grasping configurations.

\smallskip
\noindent
{\bf 2) Designing optimal grasps and assessing their qualities.} We pose the grasp synthesis problem as an optimization problem to generate the optimal arm shape. 
We obtain the first order necessary optimality conditions using Pontryagin's Maximum Principle~\cite{liberzon2011calculus} and numerically compute the solutions for three different boundary shapes. Furthermore, we define grasp quality metrics which depend on algebraic properties of the continuum grasp map and systematically compare these metrics for each object boundary. These metrics may be viewed as continuum analogs of their discrete counterparts~\cite{roa2015grasp, rubert2018characterisation}.

\smallskip
\noindent
{\bf 3) Proposing a dynamic feedback control.} Finally, we explore feedback to solve the grasp synthesis problem and provide a pathway toward a dynamic control design. Modeling the arm as a Cosserat rod~\cite{antman1995nonlinear, chang2023energy}, we propose a feedback control law to dynamically stabilize the arm to a grasping configuration and provide guarantees of convergence (Prop.~\ref{prop:dyanmic_feedback}). We also produce numerical results using a high-fidelity simulation platform \textit{PyElastica}~\cite{gazzola2018forward, PyElastica}.

\smallskip
The remainder of this paper is structured as follows.
Section~\ref{sec:model} presents the modeling of the grasp kinematics, Sec.~\ref{sec:control_optimal} discusses optimal grasping configurations, which leads to the introduction of continuum grasp quality metrics in Sec.~\ref{sec:quality}. A direction toward utilizing this framework in the dynamic case is provided in Sec.~\ref{sec:dynamics} and the paper is concluded in Sec.~\ref{sec:conclusion}.  

\begin{figure*}[t]
	\centering
	\includegraphics[width=1\textwidth]{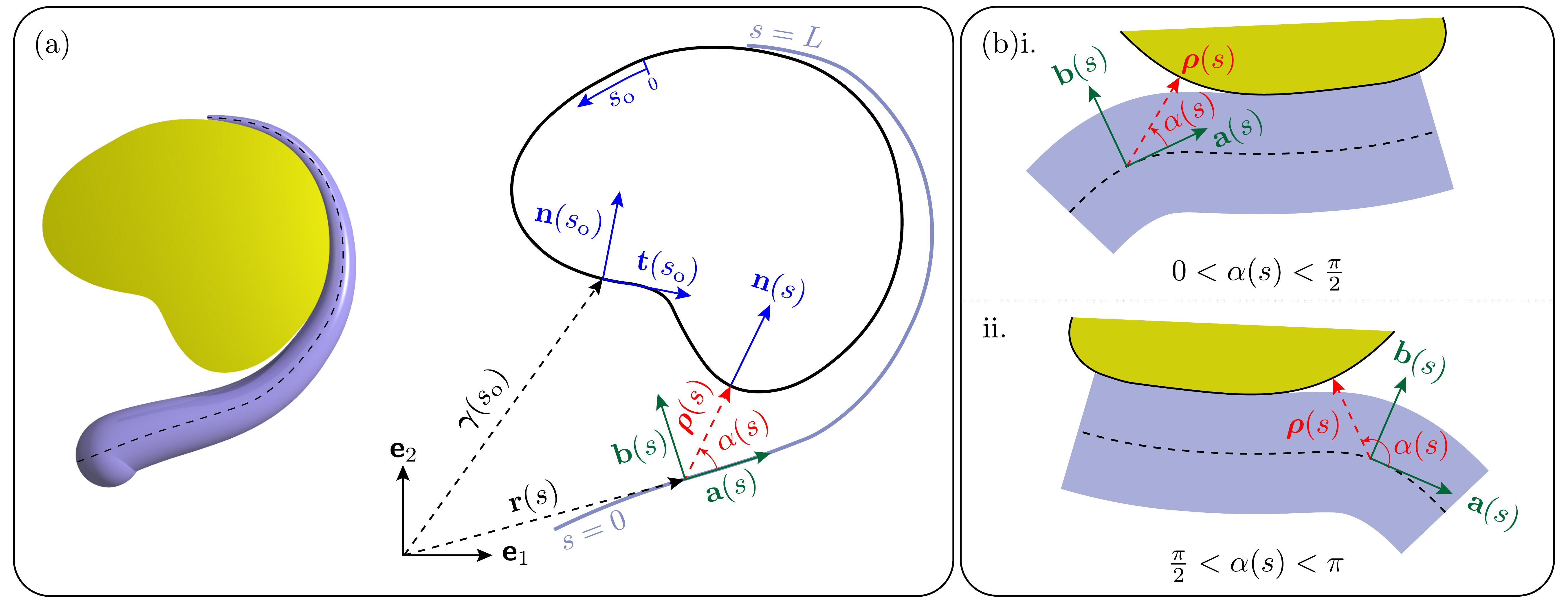}
	\caption{(a) Schematic of a continuum arm grasping a planar object with smooth boundary. The boundary of the object $\gamma$ is parameterized by its arclength $s_\object$. The arm is described by the position of its centerline $\mathbf{r}(s) \in \R^2$ where $s\in [0, L]$ is the arclength of the centerline. The proposed control is given by the contact vector $\bm{\rho}(s)$ and the contact angle $\alpha(s)$ defined in \eqref{eq:rho_alpha_definition}. (b) An illustration showing the arm: i. approaching the target when $0< \alpha < \pi/2$; ii. departing the target when $\pi/2 < \alpha < \pi$.}  
	\label{fig:modeling}
    \vspace*{-10pt}
\end{figure*}

\section{Contact kinematics} \label{sec:model}
In this section, we present kinematic models for a continuum arm and an object in a plane spanned by a fixed laboratory frame $\{\unitx, \unity \}$ (see a representative illustration in Fig.~\ref{fig:modeling}(a)). 

\subsection{Continuum arm kinematics}
A soft continuum arm is modeled as a planar Cosserat rod \cite{antman1995nonlinear, chang2020energy}. 
Moreover, the inextensibility and unshearability conditions are assumed for simplicity (Kirchhoff rod). 

The geometry and kinematics of the continuum arm is entirely described by its centerline curve, parameterized by its arclength $s$. 
The position vector of the centerline is denoted by $\mathbf{r}(s) \in \R^2, s\in [0, L]$, where $L$ is the length of the arm. The angle $\theta(s) \in S^1$ describes the material frame spanned by the orthonormal basis $\set{\mathbf{a}, \mathbf{b}}$, where $\mathbf{a} = \cos \theta \,\mathbf{e}_1 + \sin \theta \, \mathbf{e}_2, ~ \mathbf{b} = -\sin \theta \, \mathbf{e}_1 + \cos \theta \, \mathbf{e}_2$. The kinematics of the arm are then given by
\begin{equation}
	\frac{\dif \mathbf{r}}{\dif s} = \begin{bmatrix} \cos \theta \\ \sin \theta \end{bmatrix} = \mathbf{a}, \quad \frac{\dif \theta}{\dif s} = \kappa
	\label{eq:kinematics}
\end{equation}
where $\kappa(s)$ is called the curvature of the arm centerline.

\subsection{Defining the object}

We consider the object to be a \textit{fixed} planar rigid body. We assume that it has a smooth boundary, i.e. its boundary is represented by a planar smooth curve, parameterized by its arclength $s_\object$. Let the position of the boundary be denoted by $\bm{\gamma}(s_\object) \in \R^2, s_\object \in [0, L_\object]$, where $L_\object$ is the length of the boundary (we take $L_\object \geq L$). The angle $\phi(s_\object) \in S^1$ describes the moving tangent-normal frame $\{\Tangent(s_\object), \Normal(s_\object)\}$ as $\Tangent(s_\object) = \cos \phi \, \unitx + \sin \phi \, \unity$ and $\Normal(s_\object) = -\sin \phi \, \unitx + \cos \phi \, \unity$. Then, the curve $\bm{\gamma}$ is described by
\begin{align}
\begin{split}
\frac{\dif \bm{\gamma} }{\dif s_\object}  &= \begin{bmatrix} \cos \phi  \\ \sin \phi  \end{bmatrix} = \Tangent , \quad 
\frac{\dif \phi}{\dif s_\object} = \kappa_\object 
\end{split}
\label{eq:gamma}
\end{align}
where $\kappa_\object (s_\object)$ is the curvature of the object. Finally, notice that $\bm{\gamma}$ is a simple closed curve, so that $\bm{\gamma} (0) = \bm{\gamma} (L_\object)$.

\subsection{Continuum contact kinematics}

In this section, we study the kinematics of the soft arm being in continuum contact with the planar object. Moreover, we only consider the case where the object is toward the left of the arm, since the other case (where the object is toward the right) can be dealt with analogously. The continuum contact is formally defined as follows. 

For each arm arclength $s \in [0, L]$, a point of interest on the object boundary is the closest to $\mathbf{r}(s)$. Denote the closest boundary point as $\bm{\gamma}(s)$. Notice that due to (potentially) non-zero curvatures of the arm and the object, the point $\bm{\gamma}(s)$ does not necessarily move along the boundary with a unit speed as in \eqref{eq:gamma}. In other words, the two arclengths  of the arm and the object are related by a scaling factor as follows
\begin{align}
\frac{\dif s_\object}{\dif s} = \nu_\object
\end{align}
Then, the derivatives in the equations \eqref{eq:gamma} can be recast with respect to $s$ by multiplying the right hand sides by the scaling factor $\nu_\object (s)$. 

\begin{remark} 
{\bf The `shadow curve'.}
The two curves $\bm{\gamma}(s)$ and $\mathbf{r}(s)$ can be thought of as trajectories of two point particles moving in the plane with different speeds, the particle on the object at speed $\nu_\object$ and the particle on the arm at unit speed $\nu_{\text{arm}} = 1$ (see \eqref{eq:kinematics}). Thus, the curve $\bm{\gamma}(s),\, s \in [0, L]$, is viewed as an accompanying `shadow curve' to the arm\footnote{This is also called a Bertrand mate in the literature~\cite{Bertrand1850, zhang2004boundary}}.    
\end{remark}

Let the contact vector $\bm{\rho}(s)$, contact distance $\rho (s)$, and the contact angle $\alpha (s)$ (see Fig.~\ref{fig:modeling}(a)) be defined as follows
\begin{align}
\begin{split}
	\bm{\rho}(s) := \bm{\gamma}(s) - \mathbf{r}(s), &\quad \rho(s) := \norm{\bm{\rho}(s)} \\
	\mathsf{R}(\alpha(s)) \, {\mathbf{a}(s)} &:= \frac{\bm{\rho}(s)}{\rho(s)}
\end{split}
\label{eq:rho_alpha_definition}
\end{align}
where $\mathsf{R}(\alpha)$ is the planar rotation matrix for rotating a vector in $\R^2$ counterclockwise by the angle $\alpha$. We also define the contact depth $\delta(s)$ as
\begin{align}
\delta (s) := r(s) - \rho(s)
\label{eq:delta}
\end{align}
where $r(s)$ is the radius of arm cross-section (assumed to be circular). We then have the following definition.
\smallskip
\begin{definition} \label{def:contact}
{\bf (Continuum contact)} The arm is said to be in contact with the object at arclength $s$ if $\delta (s) \geq 0$.
\end{definition}

\smallskip
\begin{remark} \label{remark:contact_forces}
{\bf Contact forces.} At the contact point $\bm{\gamma}(s)$, a normal contact force in the direction perpendicular to the surface tangent (i.e., along the $\Normal (s)$ vector) is generated~\cite{murray1994mathematical, halder2025statics}. A frictional force along the direction of the tangent $\Tangent (s)$ is also generated due to the friction between the two bodies in contact. Modeling the contact forces is the subject of the field of contact mechanics~\cite{johnson1987contact, popov2019handbook}. Typically, the contact depth $\delta$ is one of the leading factors that determines the normal contact force. Detailed discussion of this topic is postponed for a future work. 
\end{remark}



The evolution of the variables $\rho$ and $\alpha$ are obtained from their definitions~\eqref{eq:rho_alpha_definition} as~\cite{halder2016steering, wang2022sensory}
\begin{align}
\begin{split}
\frac{\dif \rho}{\dif s} &= - \cos \alpha \\
\frac{\dif \alpha}{\dif s} &= - \kappa + \frac{1}{\rho} (-\nu_\object + \sin \alpha)
\end{split}
\label{eq:rho_alpha_1}
\end{align}
It remains to calculate the value of $\nu_\object$, for which we proceed as follows. 
Since $\bm{\gamma}(s)$ is by definition the closest boundary point to $\mathbf{r}(s)$, we necessarily have $\bm{\rho}(s) \cdot \Tangent (s) = 0$,
where `$\cdot$' denotes the dot product. Differentiating both sides 
of this condition with respect to $s$ and using $\mathbf{a}\cdot \Tangent = \cos (\pi/2 - \alpha) = \sin \alpha$, we obtain 
\begin{align}
\nu_\object = \frac{\sin \alpha}{1+\rho \kappa_\object} 
\label{eq:object_speed}
\end{align} 
Finally, plugging this result in \eqref{eq:rho_alpha_1}, we obtain
\begin{align}
\begin{split}
\frac{\dif \rho}{\dif s} &= - \cos \alpha \\
\frac{\dif \alpha}{\dif s} &= - \kappa + \frac{\kappa_\object}{1+\rho \kappa_\object} \sin \alpha
\end{split}
\label{eq:rho_alpha_2}
\end{align}
Equations~\eqref{eq:rho_alpha_2} are regarded as the contact kinematic equations, where the arm curvature $\kappa(s)$ acts as a control variable.

For realistic contact situations, we assume that $\alpha(s) \in (0, \pi),~\forall s$. In Fig.~\ref{fig:modeling}(b), we illustrate two possible contact scenarios $0< \alpha \leq \tfrac{\pi}{2}$ and $\tfrac{\pi}{2} < \alpha < \pi$. On the other hand, we assume a positive $\rho$, i.e. $\delta (s) < r(s), ~ \forall s$. Therefore, only such arm curvatures $\kappa (s)$ are allowed so that the resulting $\rho$ and $\alpha$ profiles satisfy these conditions.

Furthermore, we acknowledge a singularity in the expression of $\nu_\object$~\eqref{eq:object_speed}, when $1+\rho \kappa_\object = 0$. This may happen if the object curvature is negative enough. To avoid such situations, we assume that the object curvature $\kappa_\object$ is `nice enough' that the shadow object particle always \textit{moves forward} along the boundary, i.e. $\nu_\object (s) > 0, \forall s$ (see~\cite{zhang2004boundary} for a detailed discussion on this topic). Notice that under the constraints on $(\rho, \alpha)$, $\nu_\object (s) > 0$ implies $1 + \rho (s) \kappa_\object (s) > 0$. In particular, this is the second order optimality condition for the point $\bm{\gamma}(s)$ being the closest to $\mathbf{r}(s)$.

\subsection{Curvature control problem for continuum contact} \label{sec:control_problem}
The kinematic equations of continuum contact \eqref{eq:rho_alpha_2} can be viewed as a non-autonomous (because of the $\kappa_\object (s)$ term) control system, where the arm curvature $\kappa(s)$ is considered as a control input. The control goal then is to choose the arm curvature function $\kappa(\cdot)$ for the kinematics~\eqref{eq:rho_alpha_2} to track a given profile $(\rho_d (\cdot), \alpha_d (\cdot))$. 

The underlying motivation for the control problem is two-fold:

(1) Soft continuum arms are usually actuated by internal musculature, that alters the arm shape (curvature) through the elastodynamics of the arm~\cite{chang2021controlling, tekinalp2024topology}. Since we are only interested in the kinematics here, it is therefore reasonable to directly treat the curvature as the control.

(2) For a given grasping task, it becomes necessary to generate a specific continuum profile of contact forces (see e.g.~\cite{halder2025statics}). As pointed out in Remark~\ref{remark:contact_forces}, the generated contact forces depend on the contact depth $\delta$ (therefore on $\rho$) and on contact angle $\alpha$. Thus, it is reasonable to track a desired $(\rho_d, \alpha_d)$ profile.


\section{Optimal grasping configuration} \label{sec:control_optimal}


\begin{figure*}[t]
	\centering
	\includegraphics[width=1\textwidth]{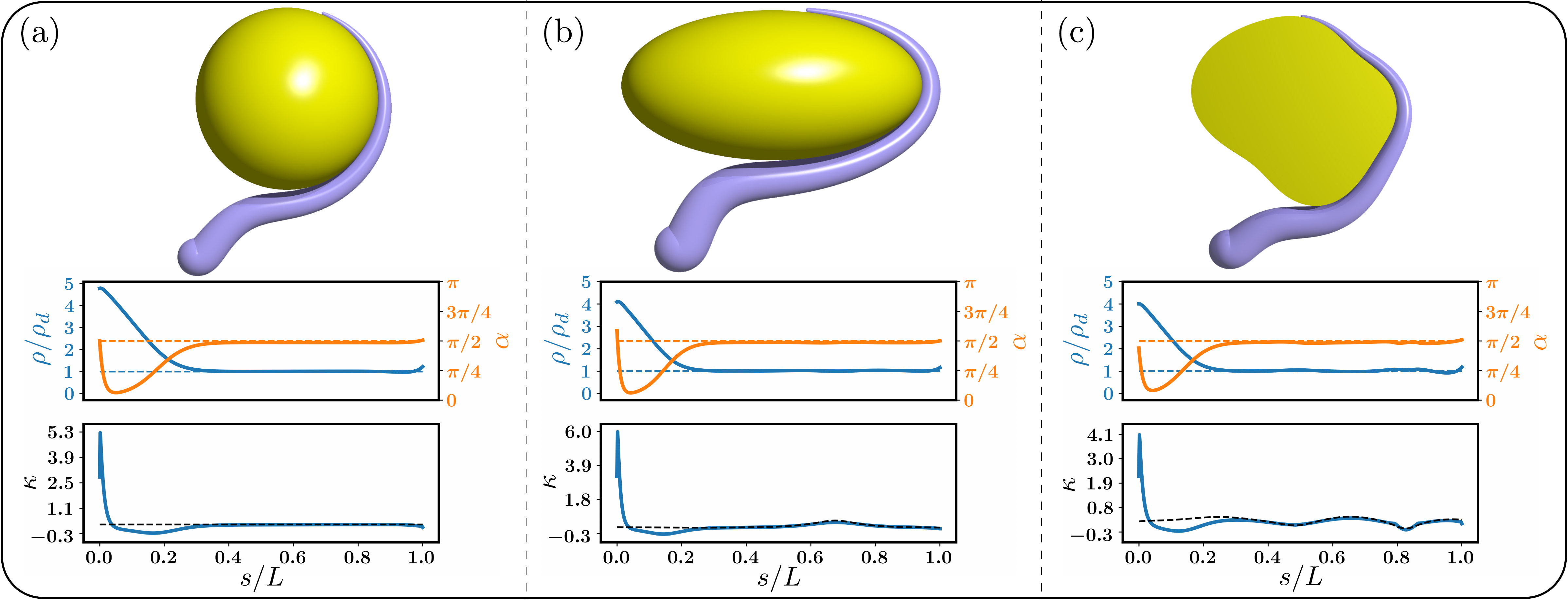}
	\caption{Optimal grasping configurations obtained by solving the optimal control problem~\eqref{eq:optimal_kappa}.
    The problem is solved for three different smooth boundary objects: (a) circle, (b) ellipse and (c) deformed circle. A rendering of the arm grasping each object is shown at the top of each column. Below this, $\rho(s)/\rho_d$ (solid blue line) and $\alpha(s)$ (solid orange line) are plotted against the normalized arc length $s/L$, with their expected convergence values indicated by blue and orange dashed lines, respectively. The arm curvature $\kappa(s)$ (solid blue line) and object curvature $\kappa_{\object}(s)$ (dashed black line) are shown in the bottom plots.}
	\label{fig:optimal}
    \vspace*{-10pt}
\end{figure*}

In this section, we solve the control problem in~Sec.~\ref{sec:control_problem} by posing it as an optimization problem. For a given desired profile $(\rho_d, \alpha_d)$, consider the following problem: 
\begin{align}
\begin{split}
\underset{\kappa(\cdot)}{\text{minimize}} ~~ &J(\kappa; (\rho_d, \alpha_d)) = \frac{1}{2}\int_0^L \kappa^2(s) \,\dif s \\
   &\hspace*{-50pt} + \int_0^L \chi\left( \frac{1}{2}(\rho(s) - \rho_d(s))^2 + \left(1 - \cos (\alpha(s) - \alpha_d (s) \right) \right)  \,\dif s \\ 
\text{subject to} \quad & \text{kinematics~\eqref{eq:rho_alpha_2}} \\
\text{and} \qquad		& (\rho, \alpha) (0) = (\rho_0, \alpha_0)~\text{given}, (\rho, \alpha)(L)~ \text{free} \\
\text{and} \qquad		& \rho(s) > 0, ~ 0 < \alpha(s) < \pi, ~~ s \in [0, L]
\end{split}
\label{eq:optimal_kappa}
\end{align}
The first term in the cost function in~\eqref{eq:optimal_kappa} serves as a regularization term and the second term is a running cost minimizing the distance between $(\rho, \alpha)$ and $(\rho_d, \alpha_d)$, and $\chi > 0$ is a weighting parameter. The optimization is subject to the kinematics~\eqref{eq:rho_alpha_2} and the constraints on the states for a feasible grasp. 

The optimization problem~\eqref{eq:optimal_kappa} is a standard optimal control problem whose solution is readily found by applying Pontryagin's Maximum Principle (PMP)~\cite{liberzon2011calculus}. Let the costate variables be denoted by $(p_1, p_2)$. Then the control Hamiltonian\footnote{The state constraints are ignored here for simplicity. These constraints are taken into account by augmenting the Lagrangian with constraint violation terms, e.g., see~\cite{chang2021controlling} for details.} is written as
\begin{align}
\begin{split}
  H &= -p_1 \cos \alpha + p_2 \left( -\kappa + \frac{\kappa_\object}{1 + \kappa_\object \rho} \sin \alpha \right) - \mathcal{L} (\rho, \alpha)  
\end{split}
\end{align}
where the Lagrangian $\mathcal{L} (\rho, \alpha)$ is the integrand in the cost function in \eqref{eq:optimal_kappa}. 
The optimal curvature is found by maximizing the control Hamiltonian $H$, i.e. setting $\frac{\partial H}{\partial \kappa} = 0$ yields
\begin{align}
\kappa = - p_2
\end{align}
Derivatives of $(p_1, p_2)$ are obtained by Hamilton's equations as
\begin{align}
\begin{split}
\frac{\dif p_1}{\dif s} &= - \frac{\partial H}{\partial \rho} 
						= \frac{\kappa_\object^2 p_2}{(1+\kappa_\object \rho)^2} \sin \alpha + \chi (\rho - \rho_d) \\
\frac{\dif p_2}{\dif s} &= - \frac{\partial H}{\partial \alpha} 
						= - p_1 \sin \alpha - \frac{\kappa_\object p_2}{1 + \kappa_\object \rho} \cos \alpha + \chi \sin (\alpha - \alpha_d)
\end{split}
\label{eq:costates}
\end{align}
We also have the transversality conditions $(p_1, p_2)(L) = (0, 0)$ due to the free endpoint condition.

\begin{remark}
In our previous work~\cite{chang2020energy, chang2021controlling}, the energy shaping control method was used to simplify the dynamic control of a soft arm. In particular, an optimal control problem in general arm coordinates (i.e. $\mathbf{r}$ and $\theta$ are state variables)  was formulated to solve for desired arm kinematic variables, including curvature (see e.g.,~\cite[equation (11)]{chang2020energy}). Even though this method was used to demonstrate grasping of a planar object with circular boundary, the effects of non-constant object boundary were not explicitly considered. The current work explicitly models the continuum grasping geometry and accounts for arbitrary object curvatures. This also allows us to pose the optimal control problem~\eqref{eq:optimal_kappa} in the reduced shape variables ($\rho$ and $\alpha$). 
\end{remark}

\subsection{Numerical results} \label{sec:numerics}
To demonstrate the proposed theoretical framework, numerical simulations are conducted to solve the kinematics of the grasping problem for three different planar boundary curves: a circle with a radius of $5$ units, an ellipse with semi-major and semi-minor axes of $8$ and $4$ units, respectively, and a deformed circle, obtained by continuously deforming the circle of radius of 5 units (see Fig.~\ref{fig:optimal}(c)). 
The arm is taken to be of length $L = 2L_\object/3$ with a (linearly) tapered radius profile $r(\cdot)$. The profile starts  with a base radius of $L/20$ and ends with a tip radius of $L/200$.
The objective is to determine the optimal configuration of the continuum arm which tracks the boundary of an object at a desired distance $\rho_d (s) = r(s)$ and $\alpha_d (s) = \tfrac{\pi}{2}$ for all $s$.

This is done by solving the state \eqref{eq:rho_alpha_2} and costate \eqref{eq:costates} equations from PMP through an iterative forward-backward algorithm~\cite{chang2020energy}. The system's kinematics \eqref{eq:rho_alpha_2} are integrated forward using the mapped curvature of the object's boundary. Subsequently, the costate equations are integrated backward to evaluate the gradient of the cost functional. The control input $\kappa$ is iteratively updated via gradient ascent as follows
\begin{align} \label{eq:forward_backward}
\begin{split}
    \kappa^{(k+1)} &= \kappa^{(k)} + \eta \frac{\partial H}{\partial \kappa}^{(k)} 
                = \kappa^{(k)} + \eta (-p_2^{(k)} - \kappa^{(k)}) 
\end{split}
\end{align}
where the superscripts $^{(k)}$ indicate the values of the variables at $k$-th iteration, and $\eta$ is a small step-size parameter. 
For each of the objects, the following parameter values are used: $\eta = 10^{-6}$, $\chi = 10$. 
The arm is initialized such that ($\rho_0,\alpha_0)$ is (5, 1.6) for the circle, (5.3, 1.8) for the ellipse, and (4.7, 1.4) for the deformed circle, respectively.

 The simulation results are shown in Fig.~\ref{fig:optimal}. For all three cases, the optimal arm successfully wraps around the boundary of the object. As the arm approaches and wraps around the object, the normalized distance $\rho/\rho_d$ and $\alpha$ become close to $1$ and $\tfrac{\pi}{2}$, respectively, indicating that the arm has achieved the given objectives.
The arm curvature $\kappa$ closely follows the object curvature $\kappa_{\object}$ once contact is established, but does not match exactly because the arm wraps outside the object boundary at a distance $\rho_d$.

\section{Algebraic grasp quality} \label{sec:quality}

Once a grasping configuration is designed, a question about the goodness of such a grasp arises naturally.
Grasp quality is a well-established concept in the grasping literature, where quantitative metrics are used to evaluate the effectiveness of a given grasp configuration~\cite{rubert2018characterisation, li1988task}. Different approaches of computing such quality metrics include magnitude of contact forces, algebraic properties of a `grasp map', and the geometric distribution of contact points (see~\cite{rubert2018characterisation, roa2015grasp} for an overview). In this section, a class of continuum grasp quality metrics is introduced, based on the algebraic properties of a continuum grasp map.


\subsection{Preliminaries: The grasp map}
We first define the `grasp map' for point contacts (see also \cite{halder2025statics, murray1994mathematical} for further details). Suppose there are $n$ contact points on the boundary at locations $\bm{\gamma}_i$, with the tangent angles $\phi_i$. Then, the mapping to transform a point contact force in the local contact frame to a wrench (force and couple) in a global frame is called the \textit{(point) grasp map} for $i$-th contact $G_i \,:\, \R^2 \rightarrow \R^3$, and is expressed as $G_i = \begin{bmatrix} I_2 \\ (\bm{\gamma}_i^\perp)^\transpose  \end{bmatrix} \mathsf{R} (\phi_i)$, where $I_2$ is the $2 \times 2$ identity matrix, $\mathsf{R}(\cdot)$ is the rotation matrix as introduced in~\eqref{eq:rho_alpha_definition}, and ${\bm{\gamma}_i}^\perp = \mathsf{R}(\pi/2) \bm{\gamma}_i$.

\begin{definition} \label{def:point_grasp_map}
    {\bf(Discrete grasp map)}  The discrete grasp map is defined by collecting all individual point grasp maps, i.e. $\tilde{G} \,:\, \R^{2n} \rightarrow \R^3$,   $\tilde{G} := [G_1 ~~ G_2 ~~ \cdots G_n]$.
\end{definition}

To extend the idea of grasp maps to the continuum setting, we make use of the distributed nature of the contact along the boundary. Recall from Definition~\ref{def:contact} that at an arm arclength $s \in [0, L]$, the variable $\delta(s)$ determines if the arm is in contact with the object or not, i.e. denote the contact set as $\mathcal{S} := \{ s \in [0, L] \,:\, \delta(s) \geq 0 \} \subset [0, L]$. Further, denote the local contact forces at a contact point $s$ by $f(s) \in \R^2$, and assume also that the function $f(\cdot)$ is square integrable, i.e. $f \in \mathrm{L}_2(\mathcal{S}; \R^2) =: \mathcal{U}$. 
\begin{definition}
    {\bf(Continuum grasp map)} The mapping to transform all (local) contact forces to a wrench in the global frame is called the continuum grasp map $\mathcal{G}: \mathcal{U} \rightarrow \R^3$, and is expressed as
    \begin{align}
        \mathcal{G} \cdot f = \int_\mathcal{S} G(s) f(s) \, \dif s
        \label{eq:grasp_map}
    \end{align}
    where $G(s)$ is the point grasp map at point $s$. 
\end{definition}


\subsection{Continuum grasp quality}
Three different grasp quality metrics are defined in the discrete case, based on the algebraic properties of the grasp map $\tilde{G}$ (see a summary in~\cite{rubert2018characterisation, roa2015grasp}):

\begin{itemize}
    \item Smallest singular value of $\tilde{G}$, $\sigma_{\min} (\tilde{G})$~\cite{li1988task}: This metric measures how far a grasp is from a singular configuration, i.e., the grasp loses capability of resisting wrenches in at least one direction.
    \item Volume of $\tilde{G}$ in the wrench space, $\Pi_i \, \sigma_i(\tilde{G})$~\cite{li1988task}: This metric measures the volume of the ellipsoid generated by all reachable wrenches from all contact forces of unit magnitude.
    \item Grasp Isotropy Index, $\sigma_{\min} (\tilde{G})/ \sigma_{\max} (\tilde{G})$~\cite{kim2001optimal}: This metric emphasizes uniform contribution from each contact point.
\end{itemize}

To extend these metrics in the continuum setting, the singular values of the map $\mathcal{G}$ need to be computed. We show below that these singular values are directly related to the eigenvalues of a Gramian matrix. 

\begin{proposition} \label{prop:singular}
    Define the Gramian matrix 
    \begin{align}
        W := \int_\mathcal{S} G(s) G^\transpose(s) \, \dif s
    \end{align}
    Then, the singular values of $\mathcal{G}$ are related to the eigenvalues of $W$ as $\sigma(\mathcal{G}) = \sqrt{\lambda(W)}$.
\end{proposition}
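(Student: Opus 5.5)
The plan is to treat $\mathcal{G}$ as a bounded linear operator from the Hilbert space $\mathcal{U} = \mathrm{L}_2(\mathcal{S};\R^2)$, with inner product $\langle f, g\rangle_{\mathcal{U}} = \int_\mathcal{S} f^\transpose(s) g(s)\,\dif s$, into the finite-dimensional space $\R^3$ with the Euclidean inner product. The singular values of such an operator are, by definition, the square roots of the eigenvalues of $\mathcal{G}\mathcal{G}^*$, where $\mathcal{G}^*$ is the Hilbert adjoint. Since the range of $\mathcal{G}$ is at most three-dimensional, $\mathcal{G}$ has finite rank. Its nonzero singular values coincide with those of the compact self-adjoint operator $\mathcal{G}^*\mathcal{G}$ on $\mathcal{U}$ (nonzero spectra of $AB$ and $BA$ agree). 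So it suffices to compute $\mathcal{G}\mathcal{G}^*$ and identify it with $W$.

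\textbf{Step 1: boundedness.} First I would check that $\mathcal{G}$ is well defined and bounded. The entries of $G(s)$ involve only $\mathsf{R}(\phi(s))$, which is bounded, and $\bm{\gamma}(s)$, which is continuous on the compact interval $[0,L]$. Hence $G(\cdot)$ is bounded on $\mathcal{S}$, which is a measurable subset of $[0,L]$. By Cauchy--Schwarz, $\norm{\mathcal{G} f} \le \left(\int_\mathcal{S} \norm{G(s)}^2\dif s\right)^{1/2} \norm{f}_{\mathcal{U}}$.

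\textbf{Step 2: the adjoint.} For $w \in \R^3$ and $f \in \mathcal{U}$,
\begin{align}
w^\transpose (\mathcal{G} f) = \int_\mathcal{S} w^\transpose G(s) f(s)\,\dif s = \int_\mathcal{S} \big(G^\transpose(s) w\big)^\transpose f(s)\,\dif s ,
\end{align}
so $(\mathcal{G}^* w)(s) = G^\transpose(s) w$. This function lies in $\mathcal{U}$ by the boundedness established in Step 1.

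\textbf{Step 3: identify $W$.} Composing gives $\mathcal{G}\mathcal{G}^* w = \int_\mathcal{S} G(s)G^\transpose(s)\,\dif s\; w = W w$. Thus $\mathcal{G}\mathcal{G}^* = W$ as a $3\times 3$ matrix. It is symmetric and positive semidefinite because $w^\transpose W w = \int_\mathcal{S}\norm{G^\transpose(s) w}^2\dif s \ge 0$. Its eigenvalues are therefore real and nonnegative, and $\sigma_i(\mathcal{G}) = \sqrt{\lambda_i(W)}$, $i=1,2,3$, which is the claim.

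The only delicate point is Step 3's appeal to the definition of singular values for an operator between infinite- and finite-dimensional spaces. I would address it with the Schmidt decomposition. Take orthonormal eigenvectors $u_i$ of $W$ with eigenvalues $\lambda_i > 0$, and set $v_i := \mathcal{G}^* u_i/\sqrt{\lambda_i}$. These $v_i$ are orthonormal in $\mathcal{U}$, and $\mathcal{G} f = \sum_i \sqrt{\lambda_i}\,\langle v_i, f\rangle_{\mathcal{U}}\, u_i$. This exhibits the singular values explicitly, with zero singular values corresponding to zero eigenvalues of $W$.
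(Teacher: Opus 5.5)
Your proof is correct. It reaches $W$ from the opposite side from the paper. The paper defines the singular values through $\mathcal{G}^*\mathcal{G}$ on the infinite-dimensional space $\mathcal{U}$. It writes the eigen-equation as $G^\transpose(s)\, w = \lambda f(s)$ with $w = \int_\mathcal{S} G f \,\dif \bar{s}$, applies $\mathcal{G}$ to it (multiply by $G(s)$ and integrate), and reads off $W w = \lambda w$.

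You instead observe that $\mathcal{G}\mathcal{G}^* = W$ identically on $\R^3$. You then pass to $\mathcal{G}^*\mathcal{G}$ through the $AB$/$BA$ spectral fact, which your Schmidt decomposition with $v_i = \mathcal{G}^* u_i/\sqrt{\lambda_i}$ makes explicit.

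The paper's route stays with the textbook definition and in effect proves the $AB$/$BA$ fact inline, but only in one direction. It shows that an eigenvalue of $\mathcal{G}^*\mathcal{G}$ is an eigenvalue of $W$. This silently requires $w \neq 0$, which is guaranteed only when $\lambda \neq 0$: the zero eigenvalue of $\mathcal{G}^*\mathcal{G}$ has infinite multiplicity and need not be an eigenvalue of $W$. The paper also does not show the converse or match multiplicities.

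Your route gives both directions with multiplicities. Working on the $\R^3$ side also makes it natural that there are exactly three singular values, zeros included. That is the convention the metrics $Q_1 = \lambda_{\min}(W)$ and $Q_2 = \det W$ implicitly use. You also justify boundedness and the adjoint formula, which the paper calls obvious.

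One line is worth adding to your Schmidt step: eigenvectors $u_k$ with $\lambda_k = 0$ drop out of the sum because $\norm{\mathcal{G}^* u_k}^2 = u_k^\transpose W u_k = 0$.
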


\smallskip
\textit{Proof.}~~
See Appendix~\ref{appdx:singular_proof}.

\smallskip
\begin{remark} {\bf A linear systems approach.} As shown in~\cite[Sec.~III]{halder2025statics}, the Gramian matrix $W$ corresponds to a linear drift-less control system $\frac{\dif x}{\dif s} = Gf$, where the state $x$ is the cumulative global wrench and the contact forces $f$ are considered as the control inputs. 
\end{remark}

Based on Prop.~\ref{prop:singular}, we can then introduce the continuum grasp quality metrics as follows:

\begin{itemize}
    \item Smallest eigenvalue of $W$, $Q_1 := \lambda_{\min} (W) \geq 0$.
    \item Determinant of $W$, $Q_2 := \det (W) =  \Pi_i \, \lambda_i(W) \geq 0$.
    \item Inverse condition number of $W$,~$Q_3:= {\lambda_{\min} (W)}/{ \lambda_{\max} (W)} \in [0, 1]$.
\end{itemize}
Each of these metrics retain their physical significances as their discrete counterparts.

\subsection{Maximizing grasp quality}
The continuum grasp qualities $Q_i$ depend on the parts of the object under the continuum grasp, i.e. $\bm{\gamma}(s), \, s \in \mathcal{S}$, which in turn depend on the direction the arm approaches the object from. 
Therefore, these quality metrics can be written as functions of the arm base configuration, i.e., $Q_i = Q_i (\mathbf{r}_0, \theta_0)$, where $\mathbf{r}_0 \in \R^2$ is the base location and $\theta_0 \in S^1$ is the base orientation. 

To systematically assess the dependence of all three quality metrics on the arm base configurations, a `disc'-shaped region $\mathcal{D}$ around the object is created as $\mathcal{D} = \{ \mathbf{r}_0 \in \R^2 \,:\, \text{dist}(\mathbf{r}_0, \gamma(\cdot)) = d, ~ d \in [d_{\min}, d_{\max} ] \}$, where $\text{dist}(\cdot, \cdot)$ is the distance from the disc to the object boundary $\gamma(\cdot)$. The arm base is placed at $\mathbf{r}_0 \in \mathcal{D}$, represented in the polar coordinates as $(\norm{\mathbf{r}_0}, \psi)$. The base orientation is chosen so as to make the arm base `tangent' to the disc, i.e. $\theta_0 = \psi + \tfrac{\pi}{2}$. 
Then, an optimization problem is posed to find the optimal arm approach and grasp configuration for achieving the highest grasp quality
\begin{equation}
    \underset{\mathbf{r}_0 \in \mathcal{D}}{\text{maximize}}~~ Q_i(\mathbf{r_0}), \;\;\;i = 1,2,3
    \label{eq:max_quality}
\end{equation}


To numerically demonstrate the results, discs of minimum distance $d_{\min} = 3$ and maximum distance $d_{\max} = 13$ are created for each of the three objects described in Sec.~\ref{sec:numerics}. The arm is chosen to be of a constant unit radius $r(s) = 1$ and a length of $L_\object/2$, yielding $(\rho_d, \alpha_d) = (1, \tfrac{\pi}{2})$ for the inner-loop problem.
The inner-loop minimization problem is solved using the same forward-backward algorithm as in Sec.~\ref{sec:numerics}, while the MATLAB function \textit{fminsearch} is used to solve the outer-loop problem to minimize $-Q_i$, $i = 1,2,3$.


 

\begin{figure*}[t] 
	\centering
	\includegraphics[width=0.8\textwidth]{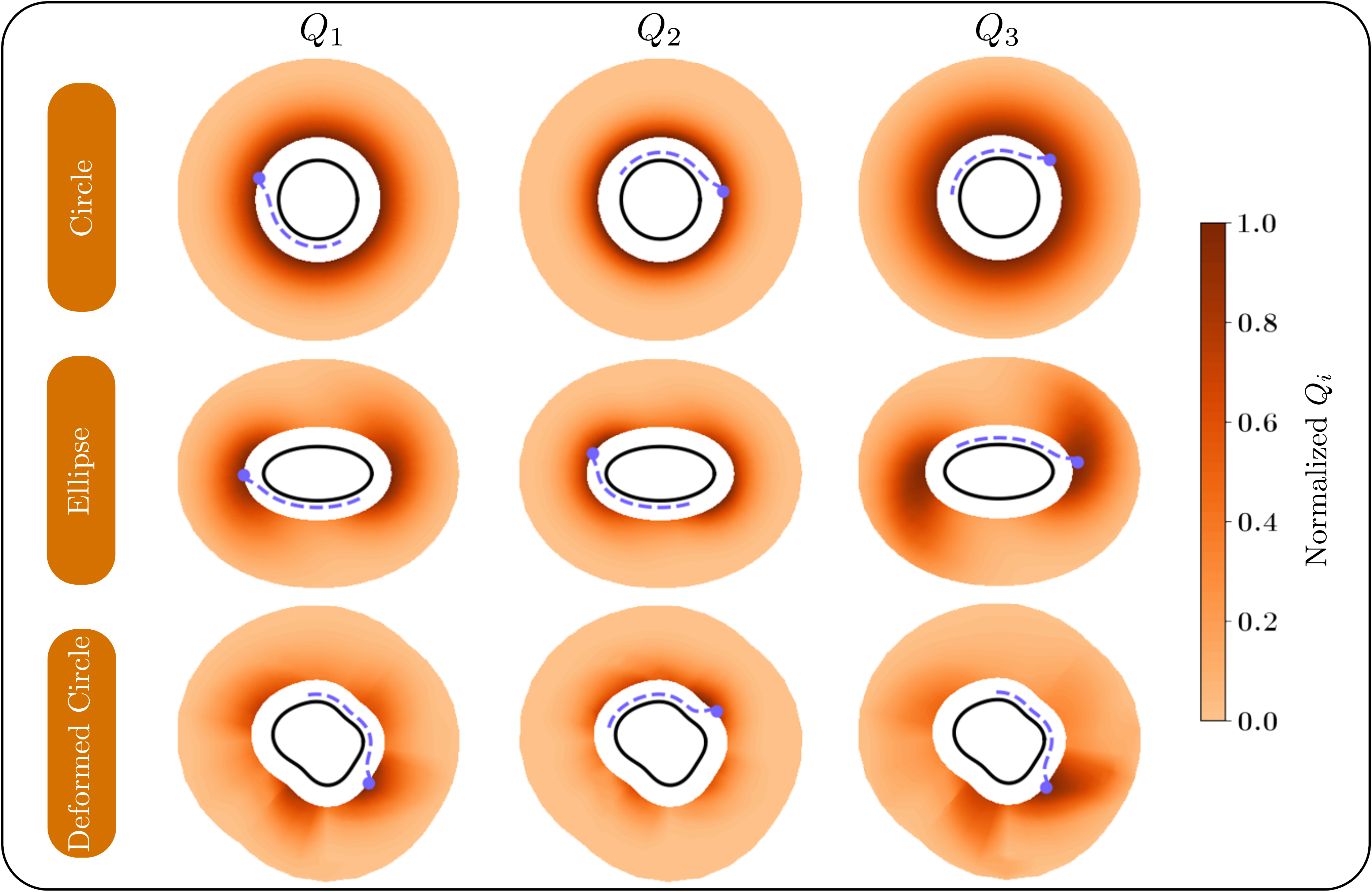}
	\caption{Grasp quality maps for the metrics $Q_1$, $Q_2$ and $Q_3$ for the circular, elliptical, and deformed circular objects. The quality values are normalized by the maximum quality for each object, where darker regions denote higher quality. The purple point indicates the optimal starting location, and the corresponding arm configuration is shown as a purple dashed curve.}
    \label{fig:max_grasp}
    \vspace*{-12pt}
\end{figure*}


The numerical results are shown in Fig.~\ref{fig:max_grasp}. 
The quality metric heatmaps (normalized for each metric) for the region $\mathcal{D}$ are shown for each object and the optimal grasps found by solving~\eqref{eq:max_quality} are drawn in  purple dashed lines. As expected, grasp quality increases when the arm starts closer to the object, allowing more of the object to be grasped. 
A radial symmetry in each quality metric for the circular boundary is also observed. 
The grasp quality is higher near the bulged areas of the ellipse, with symmetric (about the major and minor axes) patterns emerging for all three metrics (notice slight deviations for $Q_3$)
For the deformed circle, the grasp quality is higher near the concave regions of the object.

\section{Toward a dynamic feedback control} \label{sec:dynamics}

While the optimal control formulation provides a principled solution to the control problem in Sec.~\ref{sec:control_problem}, its computational complexity may limit real-time implementation, especially in a dynamic setting. In this section, we explore a plausible pathway to a dynamic feedback control for a continuum arm to grasp an object. 

\subsection{Dynamic arm model}
We begin by providing a brief dynamic model of the soft arm. To do so, we make use of a temporal variable, denoted by $t$, in addition to the spatial variable $s$ -- already defined in Sec.~\ref{sec:model}. Also, for brevity, we use the notations $(\cdot)_t$ and $(\cdot)_s$ for partial derivatives with respect to $t$ and $s$, respectively. 

The dynamics of a soft arm are described by a set of partial differential equations, following the Cosserat theory of elastic rods as follows~\cite{antman1995nonlinear, gazzola2018forward}:
\begin{equation}
	\begin{aligned}
		(\varrho A \mathbf{r}_t)_t &= \mathbf{n}_s - \zeta \mathbf{r}_t + \mathbf{f}^{\text{drag}} \\
		(\varrho I \theta_t)_t &= (EI\kappa)_s + n_2  - \zeta \theta_t + u_s 
	\end{aligned}
	\label{eq:dynamics}
\end{equation}
Here, specific variables of interest are: (i) internal forces $\mathbf{n} = n_1 \mathbf{a} + n_2 \mathbf{b}$, that are determined by the rod's inextensibility and unshearability constraints;
(ii) internal couple due to (linear) elasticity $EI \kappa$, where $E$ and $I$ are the Young's modulus and second moment of area of the cross section of the arm, respectively. The variable $u$ is regarded as internal \textit{couple control} (making the total internal couple $m = EI\kappa + u$), which may be generated by muscle actuations~\cite{chang2021controlling, wang2022sensory}. Details of the remaining variables are given in Appendix~\ref{appdx:dynamics}. 
Finally, the dynamics \eqref{eq:dynamics} are accompanied by a fixed-free boundary condition 
\begin{align}
\mathbf{r}(0, t) = \mathbf{0},~ \theta(0, t) = 0, ~\mathbf{n}(L, t) = \mathbf{0},~ m(L, t) = 0
\label{eq:boundary_conditions}
\end{align}

\subsection{Feedback control and its analysis}
With the specified dynamic model, we propose the following feedback control law 
\begin{align}
    u = - {EI}(- \mu_1 \cos \alpha + (\rho - \mu_2) \sin \alpha )
    \label{eq:dynamic_feedback}
\end{align}
where the variables $\rho$ and $\alpha$ are considered as feedback (see definition in~\eqref{eq:rho_alpha_definition}), and $\mu_1, \mu_2$ are control gains. 
To analyze the effect of this feedback control, we first consider the statics of the arm under the proposed feedback (i.e., the closed-loop kinematics), followed by a dynamic convergence analysis. 

The statics of the arm are readily found by setting all time-derivatives in \eqref{eq:dynamics} to zero and using the boundary conditions~\eqref{eq:boundary_conditions} as
\begin{align}
    \kappa = -\frac{u}{EI} = - \mu_1 \cos \alpha + (\rho - \mu_2) \sin \alpha 
    \label{eq:curvature_feedback}
\end{align}
The kinematics at the closed-loop equilibrium are then given by the non-autonomous system~\eqref{eq:rho_alpha_2}, along with curvature~\eqref{eq:curvature_feedback}. We immediately have the following result. 

\begin{proposition} \label{prop:feedback}
Suppose the object curvature is a constant, $\kappa_\object (s) = \bar{\kappa} \geq 0, \, s \in [0, L]$, i.e., the segment under contact is a circular arc or a straight line. Then, we may choose $\mu_1 >0, \mu_2 > - \bar{\kappa}$, such that $(\rho, \alpha)$ asymptotically converge to $\left(\rho_d, \tfrac{\pi}{2}\right)$, where $\rho_d > 0$ is a desired contact distance. Moreover, the desired contact distance $\rho_d$ is a root of the quadratic equation
\begin{align}
    \bar{\kappa} \rho^2 + (1-\mu_2 \bar{\kappa}) \rho - (\mu_2 + \bar{\kappa}) = 0
    \label{eq:quadratic}
\end{align}
\end{proposition}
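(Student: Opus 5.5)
With $\kappa_\object \equiv \bar\kappa$ constant, substituting the feedback curvature \eqref{eq:curvature_feedback} into the contact kinematics \eqref{eq:rho_alpha_2} gives the planar \emph{autonomous} system
\begin{align*}
\frac{\dif \rho}{\dif s} = -\cos\alpha, \qquad
\frac{\dif \alpha}{\dif s} = \mu_1\cos\alpha + \Big(\frac{\bar\kappa}{1+\bar\kappa\rho} - \rho + \mu_2\Big)\sin\alpha .
\end{align*}
The plan is to find its equilibria, show that one of them is $(\rho_d,\pi/2)$ with $\rho_d$ a root of \eqref{eq:quadratic}, and then prove asymptotic convergence with a Lyapunov function.

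\textbf{Equilibria.} On $\alpha\in(0,\pi)$, $\dot\rho=0$ forces $\alpha=\pi/2$. The second equation then reduces to $g(\rho):=\frac{\bar\kappa}{1+\bar\kappa\rho}-\rho+\mu_2=0$. Multiplying by $1+\bar\kappa\rho>0$ gives exactly the quadratic \eqref{eq:quadratic}, so every equilibrium distance is a root of it. Call $q(\rho)$ the left-hand side of \eqref{eq:quadratic}. Then $q(0)=-(\mu_2+\bar\kappa)<0$ because $\mu_2>-\bar\kappa$.
\begin{itemize}
\item If $\bar\kappa>0$, the roots have negative product, so there is exactly one positive root $\rho_d$.
\item If $\bar\kappa=0$, the equation is linear and $\rho_d=\mu_2>0$.
\end{itemize}
Since $g(0)=\bar\kappa+\mu_2>0$ and $g$ is strictly decreasing on $\rho>0$, we also get $g(\rho)(\rho-\rho_d)<0$ for $\rho\neq\rho_d$. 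This justifies the gain condition $\mu_2>-\bar\kappa$.

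\textbf{Lyapunov function.} I would use the candidate
\begin{align*}
V(\rho,\alpha) = \int_{\rho_d}^{\rho} -g(\sigma)\,\dif\sigma \;-\; \ln\sin\alpha .
\end{align*}
Both terms are nonnegative, vanish only at $(\rho_d,\pi/2)$, and $V$ blows up as $\alpha\to 0$ or $\alpha\to\pi$. Along trajectories,
\begin{align*}
\dot V = g\cos\alpha - \cot\alpha\,\big(\mu_1\cos\alpha + g\sin\alpha\big) = -\mu_1\frac{\cos^2\alpha}{\sin\alpha}\le 0 .
\end{align*}
Hence $V$ is nonincreasing. Sublevel sets keep $\alpha$ away from $0$ and $\pi$, which preserves the feasibility constraint. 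They also bound $\rho$ above, since the integral grows like $\rho^2/2$.

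\textbf{Lower bound on $\rho$ (main obstacle).} The hard step is keeping $\rho$ away from $0$ (and, for $\bar\kappa>0$, ensuring $1+\bar\kappa\rho>0$), so that sublevel sets are compact in the admissible region.
\begin{itemize}
\item If $\bar\kappa>0$, then $-\int g$ contains $-\ln(1+\bar\kappa\rho)$ terms that stay finite as $\rho\to0^+$. So I expect to need initial conditions with $V(\rho_0,\alpha_0)<\int_{\rho_d}^{0}-g$. This gives a local or regional result, and I would state that restriction explicitly.
\item If $\bar\kappa=0$, the same bound $V<\rho_d^2/2$ is needed.
\end{itemize}

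\textbf{LaSalle step.} With compactness in hand, apply LaSalle's invariance principle. The set $\dot V=0$ is $\{\alpha=\pi/2\}$. Invariance there requires $\dot\alpha=g(\rho)=0$, i.e. $\rho=\rho_d$. Therefore $(\rho,\alpha)\to(\rho_d,\pi/2)$ asymptotically, which completes the proof.
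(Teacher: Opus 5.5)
Your proof is correct and is essentially the paper's argument: your $V$ equals the paper's $-\ln\sin\alpha+\tfrac12(\rho-\mu_2)^2-\ln(1+\bar\kappa\rho)$ up to an additive constant, and the same computation $\dot V=-\mu_1\cos^2\alpha/\sin\alpha$, followed by LaSalle on $\{\alpha=\pi/2\}$, gives convergence to the root of \eqref{eq:quadratic}. Your extra steps---uniqueness of the positive root via $q(0)<0$, and restricting to sublevel sets with $V<\int_0^{\rho_d}g$ so that $\rho$ stays away from $0$---supply the compactness that the paper's appeal to LaSalle leaves implicit, and correctly show that for fixed gains this argument yields regional rather than global convergence.
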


\smallskip
\textit{Proof.}~~
See Appendix~\ref{appdx:feedback_proof}.

Proposition~\ref{prop:feedback} exhibits the efficacy of the feedback control for a special case of constant curvature object segment. The control law~\eqref{eq:dynamic_feedback} is inspired from a boundary following steering control for a moving vehicle~\cite{zhang2004boundary}. The intuition is that the first term in the control law~\eqref{eq:dynamic_feedback} makes the arm tangent $\mathbf{a}$ vector parallel to the object tangent $\Tangent (s)$ (i.e., $\alpha \rightarrow \tfrac{\pi}{2}$) and the second term serves to make the contact distance converge to a desired value (i.e., $\rho \rightarrow \rho_d$).

Finally, we state a dynamic convergence result as follows.
\begin{proposition} \label{prop:dyanmic_feedback}
    Consider the dynamics of the arm \eqref{eq:dynamics}-\eqref{eq:boundary_conditions}, along with the feedback control~\eqref{eq:dynamic_feedback}. Then, the equilibrium given by \eqref{eq:rho_alpha_2}, \eqref{eq:curvature_feedback} is (locally) asymptotically stable.
\end{proposition}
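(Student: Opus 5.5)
We will prove local asymptotic stability with an energy-shaping Lyapunov argument, in the style of the arguments for Cosserat rods in~\cite{chang2020energy, chang2021controlling}. The first step is to rewrite the feedback~\eqref{eq:dynamic_feedback} as a conservative internal couple. Because $\rho$ and $\alpha$ are functions of the configuration $(\mathbf{r}(\cdot), \theta(\cdot))$ through~\eqref{eq:rho_alpha_definition}, we look for a potential of the form
\begin{align*}
V_u = \int_0^L \left( \tfrac{1}{2} EI \kappa^2 + \Phi(\rho, \alpha, \kappa) \right) \dif s .
\end{align*}
This potential should be chosen so that the total couple $m = EI\kappa + u$ is the variational derivative of $V_u$ with respect to $\kappa$. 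The simplest candidate is $V_u = \int_0^L \tfrac{1}{2} EI (\kappa - \kappa^\ast)^2 \, \dif s$, where $\kappa^\ast := -\mu_1 \cos\alpha + (\rho-\mu_2)\sin\alpha$ is the right-hand side of~\eqref{eq:curvature_feedback}. With this choice the closed-loop statics are exactly the equilibrium $\kappa = \kappa^\ast$ described by~\eqref{eq:rho_alpha_2} and~\eqref{eq:curvature_feedback}.

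Next we take the total energy $\mathcal{E} = \int_0^L \tfrac{1}{2}(\varrho A |\mathbf{r}_t|^2 + \varrho I \theta_t^2)\,\dif s + V_u$ as the Lyapunov candidate. Differentiating along~\eqref{eq:dynamics} and integrating by parts with the fixed-free conditions~\eqref{eq:boundary_conditions}, the boundary terms vanish. The constraint forces $\mathbf{n}$ do no work, by inextensibility and unshearability. What remains is
\begin{align*}
\frac{\dif \mathcal{E}}{\dif t} = -\int_0^L \zeta \left( |\mathbf{r}_t|^2 + \theta_t^2 \right) \dif s + \int_0^L \mathbf{f}^{\text{drag}} \cdot \mathbf{r}_t \, \dif s \le 0 ,
\end{align*}
where the drag term is dissipative by construction (Appendix~\ref{appdx:dynamics}).

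Finally we invoke LaSalle's invariance principle. On the set $\{\dot{\mathcal{E}} = 0\}$ we have $\mathbf{r}_t = 0$ and $\theta_t = 0$, so the largest invariant subset consists of static solutions. These are exactly the equilibria~\eqref{eq:curvature_feedback}, and by the fixed base condition such an equilibrium is unique for given base data. Local asymptotic stability then follows, provided this equilibrium is a strict local minimum of $V_u$. Here we use Proposition~\ref{prop:feedback}: in the constant-curvature case the linearization of~\eqref{eq:rho_alpha_2} with~\eqref{eq:curvature_feedback} about $(\rho_d, \pi/2)$ is Hurwitz when $\mu_1 > 0$ and $\mu_2 > -\bar{\kappa}$. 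We expect this property to give positive definiteness of the second variation of $V_u$ near the equilibrium, and hence a neighborhood on which $\mathcal{E}$ is positive definite about the equilibrium.

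The main obstacle is the first step. The feedback $u$ depends on $\rho$ and $\alpha$, which are nonlocal functionals of the arm shape through the closest-point projection onto $\bm{\gamma}$. Its variation with respect to $(\mathbf{r}, \theta)$ therefore produces extra distributed forces beyond a pure couple, so $u$ may not be exactly a gradient. If that happens, we would instead linearize the full closed loop about the equilibrium. We would then show that the quadratic energy is strictly decreasing up to terms dominated by the damping $\zeta$ for sufficiently large $\zeta$, or for gains in a suitable range. Infinite-dimensional compactness for LaSalle would be handled as in~\cite{chang2021controlling}, or assumed.
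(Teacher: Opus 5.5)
Your overall route (realize the closed-loop couple as the gradient of a shaped potential, use the damped Hamiltonian structure of \eqref{eq:dynamics} to get $\dot{\mathcal{E}}\le 0$, conclude with LaSalle) is the same as the paper's. As written, however, your argument has a genuine gap, exactly at the step you flag as the main obstacle. Your $V_u=\int_0^L \tfrac12 EI(\kappa-\kappa^\ast)^2\,\dif s$ has $m=EI(\kappa-\kappa^\ast)$ as its $\kappa$-derivative only if $\kappa^\ast$ is held fixed. Since $\kappa^\ast$ depends on the configuration through $(\rho,\alpha)$, the correct energy balance is
\begin{align*}
\frac{\dif \mathcal{E}}{\dif t} = -\int_0^L \zeta\left(|\mathbf{r}_t|^2+\theta_t^2\right)\dif s + \int_0^L \mathbf{f}^{\text{drag}}\cdot\mathbf{r}_t\,\dif s - \int_0^L EI(\kappa-\kappa^\ast)\,\partial_t\kappa^\ast\,\dif s .
\end{align*}
The last term has no sign, so neither $\dot{\mathcal{E}}\le 0$ nor the LaSalle step follows. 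Your fallback is not carried out, and it would add hypotheses (large $\zeta$, restricted gains) that are not in the statement. Separately, Hurwitz stability of the arclength dynamics \eqref{eq:rho_alpha_2}, \eqref{eq:curvature_feedback} (Proposition~\ref{prop:feedback}) says nothing about the second variation of an energy on configuration space, and it would restrict you to constant $\kappa_\object$. For your $V_u$ it is not needed anyway, since $V_u\ge 0$ vanishes only at the unique static solution.

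The paper instead takes a primitive of the feedback with respect to $\kappa$. Using that $(\rho,\alpha)$ is uniquely determined by $\kappa$ (given the base data and $\kappa_\object$), it sets $\mathcal{W}=\int[-\mu_1\cos\alpha(\kappa)+(\rho(\kappa)-\mu_2)\sin\alpha(\kappa)]\,\dif\kappa$ and asserts that $u$ is its gradient. Unlike your frozen quadratic, this would have the right derivative by construction. It then cites the energy-shaping argument of \cite{wang2022sensory, chang2021controlling}.

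This does not actually dispose of your objection. A functional of $\kappa$ whose $\mathrm{L}_2$-gradient is $\kappa^\ast$ (or $EI\kappa^\ast$) exists only if the derivative of $\kappa\mapsto\kappa^\ast$ is a symmetric operator. But $(\rho,\alpha)(s)$ depends on $\kappa$ only through $\kappa|_{[0,s]}$, via $\theta(s)$ and $\mathbf{r}(s)$. So this derivative is a Volterra operator $\eta\mapsto\int_0^s K(s,\sigma)\eta(\sigma)\,\dif\sigma$. Since $\alpha(s)$ is the polar angle of $\bm{\rho}(s)$ minus $\theta(s)$, and the $\mathbf{r}(s)$-contribution vanishes as $\sigma\to s$, one gets
\begin{align*}
K(s,s^-)=-\partial\kappa^\ast/\partial\alpha=-(\mu_1\sin\alpha+(\rho-\mu_2)\cos\alpha),
\end{align*}
which equals $-\mu_1\neq 0$ wherever $\alpha=\pi/2$. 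A nonzero Volterra operator cannot be symmetric, so the feedback is genuinely non-conservative. The paper's key step is therefore asserted rather than proved, and your concern is well founded. A complete proof would need an argument of the kind you sketch at the end (likely under extra hypotheses), not the energy identity alone.
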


\smallskip
\textit{Proof.}~~
See Appendix~\ref{appdx:dynamic_feedback_proof}.

\begin{figure}[t]
    \hspace*{1pt}
	\includegraphics[width=0.48\textwidth, trim = {0pt 0 0pt 0}, clip = false]{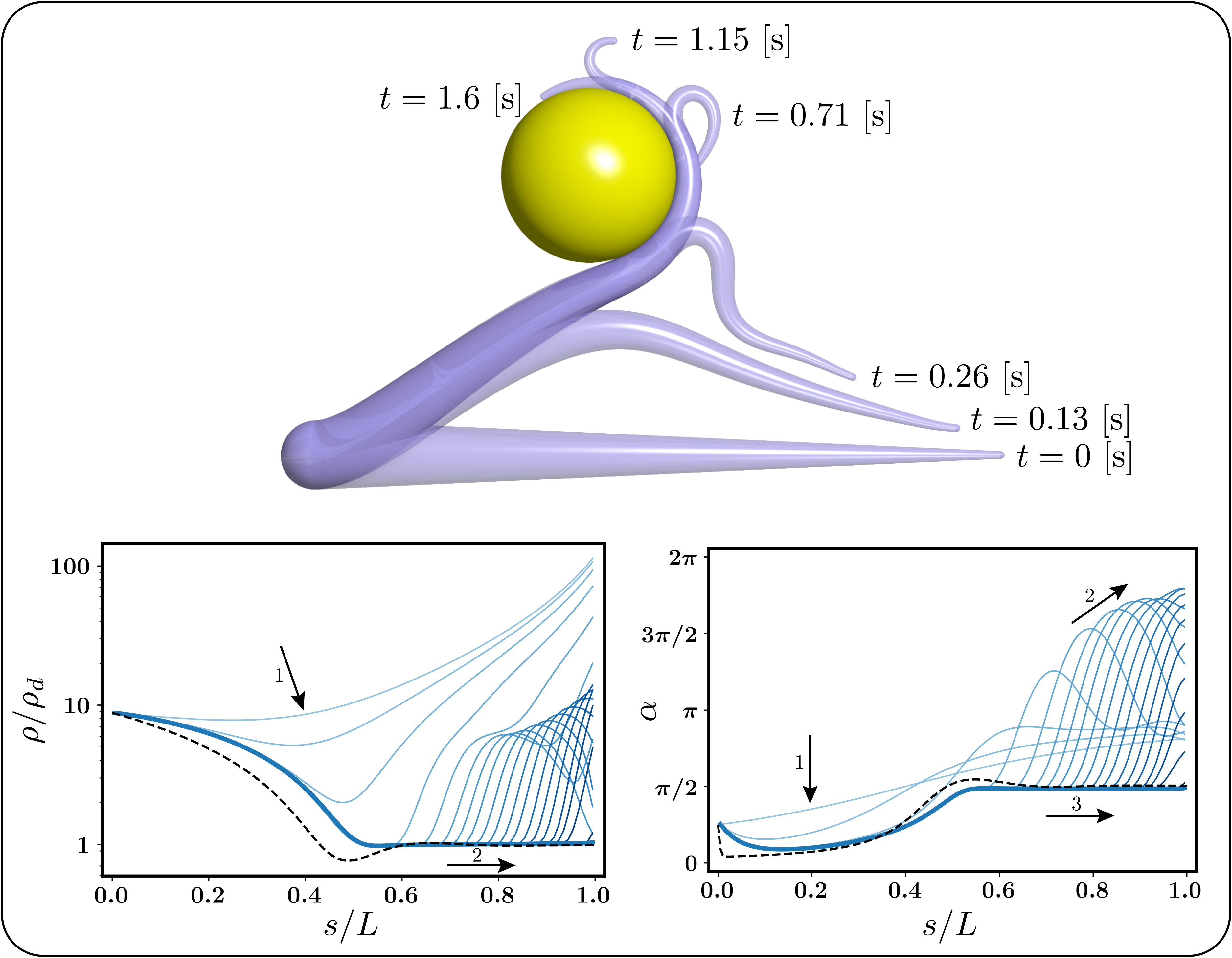}
	\caption{Dynamic simulation results for a soft arm grasping a fixed circular object. The top section shows six time instances of the arm in transparent purple. The bottom section shows a series of time snapshots of $\rho/\rho_d$ and $\alpha$ in blue. The black dashed lines represent the theoretical static solutions, corresponding to the curvature~\eqref{eq:curvature_feedback}.
    Black arrows illustrate the time progression of $\rho/\rho_d$ and $\alpha$. 
    }
	\label{fig:dynamics}
    \vspace*{-10pt}
\end{figure}

\subsection{Dynamic simulation results} \label{sec:dynamic_results}
This subsection presents the results of a dynamic simulation of a soft arm grasping a static circular object. 
The dynamics of the rod \eqref{eq:dynamics}-\eqref{eq:boundary_conditions} are numerically solved using the open-source software \textit{PyElastica}~\cite{gazzola2018forward, PyElastica}. The arm is modeled as a tapered rod with a length of 20 [cm], a base radius of 1 [cm] and a tip radius of 0.1 [cm]. The arm is tasked with wrapping around a circular object of radius 2.5 [cm], with the desired separation distance $\rho_d$ profile being the same as the tapered radius profile of the arm. The arm accomplishes the grasping task by applying the dynamic feedback control law~\eqref{eq:dynamic_feedback}. Additional simulation details are provided in Appendix~\ref{appdx:dynamics}.


As can be seen in Fig. \ref{fig:dynamics}, the rod initially bends towards the closest point on the boundary of the object, then the tip curls and progressively wraps around the object until full contact is achieved. During wrapping, ${\rho/\rho_d}$ converges to 1 and $\alpha$ becomes close to, but not exactly $\tfrac{\pi}{2}$ due to the tapering of the arm. 
Minor discrepancies against the theoretical static configuration are observed, due to arm not being able to pass through the object (modeled as an obstacle here) and the tapering of the arm. 


 
\vspace*{-5pt}
\section{Conclusion} \label{sec:conclusion}
This paper presents a control-theoretic framework for analyzing the kinematics of planar grasping by a soft continuum arm. By viewing the grasping problem as a boundary-following problem, the kinematics are expressed in terms of geometric shape variables, including the distance and bearing to the boundary. The reduced kinematics are considered as a non-autonomous control system, with the arm curvature serving as the control input. An optimal control problem is then posed to construct optimal grasping configuration for a given object, whose solution is also characterized. The resulting continuum grasp map leads to the introduction of a class of grasp quality metrics, which are generalizations of quality metrics in the rigid-body grasping case. Furthermore, a pathway toward utilizing the kinematic analysis in synthesizing dynamic feedback controls is provided. All of the theoretical developments are accompanied by illustrative numerical examples. 

Future work will incorporate contact mechanics to model the contact forces along the contact interface. Integrating the current geometric framework with dynamic Cosserat rod models~\cite{chang2020energy, chang2023energy} will be of interest. 
Another important direction is further development of the feedback control strategy proposed here to utilize feedback information from tactile sensing and proprioception~\cite{wang2025neuralbiocyber}. 

\bibliographystyle{IEEEtran}
\bibliography{bibfiles/halder_papers,bibfiles/reference}

\appendices
\renewcommand{\thelemma}{A-\arabic{section}.\arabic{lemma}}
\renewcommand{\thetheorem}{A-\arabic{section}.\arabic{theorem}}
\renewcommand{\theequation}{A-\arabic{equation}}
\renewcommand{\thedefinition}{A-\arabic{definition}}
\setcounter{lemma}{0}
\setcounter{theorem}{0}
\setcounter{equation}{0}

\vspace*{-5pt}
\section{Proof of Proposition \ref{prop:singular}} \label{appdx:singular_proof}
\begin{proof}
Indeed, by definition, the singular values of $\mathcal{G}$ are the positive square roots of the self-adjoint operator $\mathcal{G}^* \mathcal{G} \,:\, \mathcal{U} \rightarrow \mathcal{U}$, where $\mathcal{G}^* \,:\, \R^3 \rightarrow \mathcal{U}$ is the adjoint of $\mathcal{G}$. It is obvious that the adjoint map $\mathcal{G}^*$ is expressed as $\mathcal{G}^* w = G^\transpose(s) w$, where $w \in \R^3$. Then, we find the eigenvalues $\lambda$ of $\mathcal{G}^*\mathcal{G}$ from the definition $\mathcal{G}^*\mathcal{G} f = \lambda f, \, f \in \mathcal{U}$, as
\begin{align}
    G^\transpose (s) \int_\mathcal{S} G(\bar{s}) f(\bar{s})\, \dif \bar{s} = \lambda f(s), \quad s \in \mathcal{S}
    \label{eq:eigenvalue}
\end{align}
Denote $w := \int_\mathcal{S} G(\bar{s}) f(\bar{s})\, \dif \bar{s} \in \R^3$. Then, multiplying both sides of \eqref{eq:eigenvalue} by $G(s)$ and integrating in the set $\mathcal{S}$, we obtain
\begin{align*}
    \left(\int_\mathcal{S} G(s) G^\transpose (s) \,\dif s \right) w = \lambda \int_\mathcal{S} G({s}) f({s})\, \dif s = \lambda w
\end{align*}
that is $W w = \lambda w$. Therefore, $\lambda$ is an eigenvalue of the Gramian matrix $W$.
\end{proof}

\vspace*{-5pt}
\section{Proof of Proposition \ref{prop:feedback}} \label{appdx:feedback_proof}
\begin{proof}
Consider a Lyapunov function candidate 
\begin{align*}
    V(\rho, \alpha) = - \ln (\sin \alpha) + \frac{1}{2}(\rho - \mu_2)^2 - \ln (1+\rho \bar{\kappa})
\end{align*}
Notice that $V$ is well-defined in the admissible region of $(\rho, \alpha)$. Taking derivative of $V$ along the trajectories of \eqref{eq:rho_alpha_2}, \eqref{eq:curvature_feedback} yields 
\begin{align*}
    \frac{\dif V}{\dif s} &= \frac{\partial V}{\partial \rho} \frac{\dif \rho}{\dif s} +  \frac{\partial V}{\partial \alpha} \frac{\dif \alpha}{\dif s} = - \mu_1 \frac{\cos^2 \alpha}{\sin \alpha} \leq 0
\end{align*}
Further, we see that the largest invariant set such that $\tfrac{\dif V}{\dif s} = 0$ is characterized by 
\begin{align} \label{eq:dvds}
    -(\rho - \mu_2) + \frac{\bar{\kappa}}{1 + \rho \bar{\kappa}} = 0
\end{align}
which simplifies to \eqref{eq:quadratic}. Thus, the statement of the Proposition follows by the application of LaSalle's theorem~\cite{khalil2002nonlinear}.
\end{proof}

\vspace*{-10pt}
\section{Proof of Proposition \ref{prop:dyanmic_feedback}} \label{appdx:dynamic_feedback_proof}
\begin{proof}
We first show that the internal control term is a gradient of an energy function. Note that for any time $t$, given a curvature profile $\kappa$, and object curvature $\kappa_\object$, $z=(\rho, \alpha)$ can be uniquely determined, i.e. we may express $z = z(\kappa; \kappa_\object)$. Define $\mathcal{W}= \int [- \mu_1 \cos \alpha (\kappa) + (\rho (\kappa) - \mu_2) \sin \alpha (\kappa)] \, \dif \kappa$. Then the control $u$ is clearly the gradient of $\mathcal{W}$. The proof then follows by applying an energy shaping argument (see~\cite{wang2022sensory, chang2021controlling} for details) that utilizes the damped Hamiltonian structure of the rod dynamics~\eqref{eq:dynamics}. 
\end{proof}

\vspace*{-0pt}
\section{Details of the dynamic model and simulation} \label{appdx:dynamics}
In the arm dynamics~\eqref{eq:dynamics}, $\varrho$ denotes the density of the arm, $A$ is the area of the cross-section, $\zeta$ is a damping co-efficient, and $\mathbf{f}^{\text{drag}}$ is a force due to fluidic drag.

The center of the circular object is positioned 8 [cm] above and 8 [cm] to the right of the base of the rod. The object is modeled as an obstacle, providing rudimentary repulsive contact forces~\cite{gazzola2018forward}. To execute the feedback control~\eqref{eq:dynamic_feedback}, $\mu_1$ is chosen to be 1 and $\mu_2$ is chosen in an adaptive manner to account for the varying radius $r(s)$ of the tapered arm 
\begin{align*}
\mu_2(s)=r(s)-\frac{1}{r_{\text{obj}}+r(s)}
\label{eq:dynamic_mu}
\end{align*}
The expression for $\mu_2(s)$ is obtained from \eqref{eq:dvds} by setting $\rho = \rho_d = r_{\text{arm}}(s)$ and $\bar{\kappa} = 1/r_{\text{obj}}$.


The rest of the parameter values used for the arm and drag model are obtained from~\cite{wang2022sensory}, based on real octopus measurements. 





\end{document}